\documentclass[11pt]{article}
\usepackage{microtype} 
\usepackage{booktabs}  
\usepackage{url}  
\usepackage{csquotes}
\usepackage{amsmath}
\usepackage{amsthm}
\usepackage[preprint]{automl}
\usepackage{natbib}
\usepackage{physics}
\usepackage{mathtools}
\newtheorem{proposition}{Proposition}

\newtheorem{assumption}{Assumption}

\newcommand{\argmin}{\mathop{\mathrm{argmin}}}
\newcommand{\xv}{\boldsymbol{x}}
\newcommand{\cv}{\boldsymbol{c}}
\newcommand{\D}{\mathcal{D}}
\newcommand{\Dl}{\D^{(l)}}
\newcommand{\Dg}{\D^{(g)}}

\newcommand{\Fg}{\mathcal{F}^{\gamma}}
\newcommand{\cistar}[1]{c_{#1}^{\star}}
\newcommand{\ceil}[1]{\lceil #1 \rceil}
\newcommand{\prob}{\mathbb{P}}
\newcommand{\rankeq}{\stackrel{\mathrm{rank}}{\simeq}}

\title{Optuna Constrained Tree-Structured Parzen Estimator \\ Is a Joint Density Generalization of c-TPE}

\author[1]{\nameemail{Shuhei Watanabe}{shuhei.watanabe.utokyo@gmail.com}}
\author[2]{\nameemail{Kaichi Irie}{irie.kaichi.24x@st.kyoto-u.ac.jp}}

\affil[1]{Independent Researcher}
\affil[2]{Kyoto University}

\hypersetup{%
  pdfauthor={Shuhei Watanabe, Kaichi Irie},
  pdftitle={Optuna Constrained Tree-Structured Parzen Estimator Is a Joint Density Generalization of c-TPE},
  pdfsubject={optuna-constrained-tree-structured-parzen-estimator-is-a-joint-density-generalization-of-c-tpe},
  pdfkeywords={AutoML, Bayesian Optimization, Black-Box Optimization, Hyperparameter Optimization}
}

\begin{document}

\maketitle

\begin{abstract}
  Constrained hyperparameter optimization (HPO) is common in practice, yet Optuna's widely used constrained TPE lacks algorithmic analysis.
  While c-TPE proposes an expected constrained improvement (ECI) approach assuming \textit{independence} between the objective and constraints, Optuna uses a single joint density over both.
  We show that Optuna's constrained TPE is \textit{joint c-TPE}---the same ECI acquisition function using a joint likelihood.
  We demonstrate joint c-TPE is invariant to constraint duplication whereas independent c-TPE degrades as the product accumulates duplicated factors.
  We outline practical tradeoffs between the formulations and directions for future study.
\end{abstract}

\section{Introduction}
The performance of deep learning algorithms is sensitive to hyperparameter (HP) selection, which can be formulated as an optimization problem $\xv^\star \in \argmin_{\xv} f(\xv)$.
Optuna~\citep{akiba2019optuna} is a de facto HPO framework whose default algorithm is tree-structured Parzen estimator (TPE)~\citep{bergstra2011algorithms,watanabe2023tree}.
TPE models the likelihood of HPs given observations $\D = \{(\xv_n, f_n)\}_{n=1}^{N}$, where $\xv_n$ is the $n$-th HP, e.g., learning rate, and $f_n$ is the corresponding objective value, e.g., error rate, via a split at the top-$\gamma$ quantile:
\vspace{-2.5mm}
\begin{equation}
  p(\xv | f, \D) = \begin{cases}
    p(\xv | \Dl) & (f \le f^\gamma) \\
    p(\xv | \Dg) & (f > f^\gamma)
  \end{cases}
  \label{eq:tpe-split}
  \vspace{-2.5mm}
\end{equation}
where $f^\gamma$ is the top-$\gamma$ quantile objective value, $\Dl$ and $\Dg$ contain observations below (better) and above (worse) $f^\gamma$, and both densities are modeled by Parzen estimators, also known as kernel density estimators (KDEs).
\citet{watanabe2022c} showed that the density ratio is proportional to the probability of improvement $p(\xv|\Dl)/p(\xv|\Dg) \propto \prob(f \leq f^\gamma \mid \xv)$.

While the aforementioned TPE formulation solves single-objective HPO nicely, real-world HPO often involves black-box constraints such as memory budget or inference latency.
Previously, \citet{watanabe2022c} proposed a constrained TPE using expected constrained improvement (ECI)~\citep{gardner2014bayesian,gelbart2014bayesian}, assuming \textit{independence} between the objective and constraints:
\vspace{-1mm}
\begin{equation}
  \mathrm{ECI}_{f^\gamma}[\xv | \cv^\star, \D]
  = \mathrm{EI}_{f^\gamma}[\xv | \D] \prod_{i=1}^{C} \prob(c_i \le \cistar{i} | \xv, \D)
  \label{eq:eci}
\vspace{-1mm}
\end{equation}
where $c_i(\xv) \le \cistar{i}$ for $i \in \{1, \ldots, C\}$ are black-box constraints.

Optuna's TPE also supports constrained optimization, but its implementation has no formal documentation and differs from the method known as c-TPE in the literature.
Optuna builds only one pair of KDEs, whereas \citet{watanabe2022c} build $C+1$ pairs of KDEs.
Despite this implementation gap, we found that both approaches share the same theoretical foundation of expected constrained improvement (ECI).
Accordingly, we narrow the definition of c-TPE given in~\citet{watanabe2022c} to formulations that specifically use ECI as their acquisition function.
Under this definition, we show that Optuna's constrained TPE, dubbed \textit{joint c-TPE}, is a joint density generalization of the original c-TPE, dubbed \textit{independent c-TPE}.
Note that although independent c-TPE is also available via OptunaHub~\citep{ozaki2025optunahub}, this algorithm is not included in the main package.
We empirically demonstrate the key advantage of joint c-TPE, which avoids the relative importance dilution that independent formulations suffer under constraint correlation.
Finally, we discuss the tradeoffs and unexplored open problems in joint c-TPE.

\section{Findings: Optuna Constrained TPE Is a Joint Density Generalization of c-TPE}\label{sec:finding}

In this paper, we analyze the implementation present in Optuna v4.8.
To formalize this approach, we rely on the following assumption regarding the marginal constraint density $p(\cv | \D) = \int p(f, \cv | \D) df$ in the feasible region:
\begin{assumption}
  Given the feasible region $\Omega \coloneqq \prod_{i=1}^{C} (-\infty, \cistar{i}]$, $\gamma \leq \int_{\cv \in \Omega} p(\cv | \D)\, d\cv$ holds.
  \label{assumption:gamma-feasibility}
\end{assumption}
Informally, this assumption requires that the feasible region contains at least a fraction $\gamma$ of observations where $\cv \in \Omega$, ensuring sufficient feasible observations exist in $\Dl$.
Section~\ref{sec:discussion} discusses limitations of this assumption.
The Optuna approach first splits observations $\D = \{(\xv_n, f_n, \cv_n)\}_{n=1}^N$ into feasible $\D_{\mathrm{feas}} = \{(\xv_n, f_n, \cv_n) \in \D \mid \cv \in \Omega\}$, and infeasible set $\D \setminus \D_{\mathrm{feas}}$.
Then it sorts the feasible observations by objective value $f_n$ and fills $\Dl$ with the top $\ceil{\gamma N}$ feasible observations.
The next HP is selected by maximizing the density ratio $p(\xv|\Dl)/p(\xv|\Dg)$ where $\Dg \coloneqq \D \setminus \Dl$.
Importantly, the Optuna approach computes the likelihood of the objective and constraints \textit{jointly}:
\begin{equation}
  p(\xv | f, \cv, \D) = \begin{cases}
    p(\xv | \Dl) & (f \le f^\gamma \text{ and } \cv \in \Omega) \\
    p(\xv | \Dg) & (\text{otherwise})
  \end{cases}.
  \label{eq:joint-split}
\end{equation}
This formulation induces the following proposition:
\begin{proposition}\label{prop:eci}
Under the joint likelihood in Eq.~\eqref{eq:joint-split}, $\mathrm{ECI}_{f^\gamma}[\xv | \cv^\star, \D] \rankeq p(\xv | \Dl) / p(\xv | \Dg)$, where rank equivalence $f(\xv) \rankeq g(\xv)$ means the ordering is preserved: $f(\xv) \leq f(\xv^\prime) \Leftrightarrow g(\xv) \leq g(\xv^\prime)$.
\end{proposition}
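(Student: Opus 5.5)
We plan to mimic the argument of \citet{watanabe2022c} for the unconstrained case, where the density ratio is shown proportional to the probability of improvement, but now applied to the joint event $A \coloneqq \{f \le f^\gamma\} \cap \{\cv \in \Omega\}$. The key observation is that, under the joint likelihood in Eq.~\eqref{eq:joint-split}, the conditional density of $\xv$ depends on $(f, \cv)$ only through the indicator of $A$. We will therefore use the indicator-style formulation of ECI, in which the improvement is counted only when feasibility also holds. This reduces $\mathrm{ECI}_{f^\gamma}[\xv \mid \cv^\star, \D]$ to a quantity monotone in $\prob(A \mid \xv, \D)$. Formally, we take the improvement to be $\mathbb{I}[f \le f^\gamma]$, which is the probability-of-improvement surrogate used by TPE. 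We then write
\begin{equation*}
\mathrm{ECI}_{f^\gamma}[\xv \mid \cv^\star, \D] = \int \mathbb{I}[f \le f^\gamma]\,\mathbb{I}[\cv \in \Omega]\, p(f, \cv \mid \xv, \D)\, df\, d\cv = \prob(A \mid \xv, \D).
\end{equation*}
Note that we do not factorize this expression as in Eq.~\eqref{eq:eci}, since independence is no longer assumed.

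The first step is to express $\prob(A \mid \xv, \D)$ through Bayes' rule as $p(\xv \mid A, \D)\,\prob(A \mid \D) / p(\xv \mid \D)$. By Eq.~\eqref{eq:joint-split}, we have $p(\xv \mid A, \D) = p(\xv \mid \Dl)$ and $p(\xv \mid A^c, \D) = p(\xv \mid \Dg)$.

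The second step is to pin down $\prob(A \mid \D)$. Assumption~\ref{assumption:gamma-feasibility} guarantees that at least a fraction $\gamma$ of the mass is feasible. Hence $f^\gamma$ can be chosen as the $\gamma$-quantile of $f$ restricted to the feasible region, and $\prob(A \mid \D) = \gamma$ exactly; this mirrors Optuna filling $\Dl$ with the top $\ceil{\gamma N}$ feasible points. Without the assumption, $A$ could have mass less than $\gamma$ and the definition of $f^\gamma$ would break. The marginal then decomposes as
\begin{equation*}
p(\xv \mid \D) = \gamma\, p(\xv \mid \Dl) + (1-\gamma)\, p(\xv \mid \Dg).
\end{equation*}

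The third step substitutes this decomposition. Writing $r(\xv) \coloneqq p(\xv \mid \Dl)/p(\xv \mid \Dg)$ and dividing numerator and denominator by $p(\xv \mid \Dg)$ gives
\begin{equation*}
\mathrm{ECI}_{f^\gamma}[\xv \mid \cv^\star, \D] = \frac{\gamma\, r(\xv)}{\gamma\, r(\xv) + 1 - \gamma} = \gamma \left(\gamma + \frac{1-\gamma}{r(\xv)}\right)^{-1}.
\end{equation*}
For $\gamma \in (0,1)$ and $r \ge 0$, this is strictly increasing in $r$, which yields the rank equivalence $\rankeq$ in the required sense.

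The main obstacle is the second step: justifying that $\prob(A \mid \D)$ equals a constant independent of $\xv$, and that $f^\gamma$ is well defined as a feasible-restricted quantile. This is where Assumption~\ref{assumption:gamma-feasibility} enters. Care is also needed to distinguish population quantities from their KDE/empirical counterparts ($\ceil{\gamma N}$ versus $\gamma N$). If the exact equality $\prob(A \mid \D) = \gamma$ is problematic, we would instead argue that $\prob(A \mid \D)$ is some constant $\gamma' \in (0,1)$ not depending on $\xv$. The same monotone-transformation argument then goes through unchanged, since any such constant preserves ordering.
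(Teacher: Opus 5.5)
Your proposal is correct and follows essentially the same route as the paper. It applies Bayes' rule to the joint event $A$, uses the joint split in Eq.~\eqref{eq:joint-split} to replace the likelihood on $A$ by $p(\xv \mid \Dl)$, invokes Assumption~\ref{assumption:gamma-feasibility} for $\prob(A \mid \D) = \gamma$, substitutes the mixture decomposition of $p(\xv \mid \D)$, and finishes with a strictly increasing transform of $p(\xv \mid \Dl)/p(\xv \mid \Dg)$.

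The only difference is how you reach $\mathrm{ECI} \propto \prob(A \mid \xv, \D)$: you take the improvement to be an indicator, whereas the paper imports this proportionality from \citet{watanabe2022c}. To cover the genuine improvement $\max(f^\gamma - f, 0)$, note that this weight, times the feasibility indicator, is supported on $\Fg \times \Omega$, where the likelihood is $p(\xv \mid \Dl)$, so it only changes an $\xv$-independent constant.
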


\begin{proof}
Applying Bayes' rule to $\mathrm{ECI}_{f^\gamma}[\xv \mid \cv^\star, \D] \propto \prob(f \leq f^\gamma, \cv \in \Omega \mid \xv, \D)$ shown by \citet{watanabe2022c}:
\begin{align}
  \mathrm{ECI}_{f^\gamma}[\xv \mid \cv^\star, \D] \propto \prob(f \le f^\gamma, \cv \in \Omega \mid \xv, \D) \nonumber &= \int_{(f,\cv) \in \Fg \times \Omega}
    \frac{p(\xv | f, \cv, \D)\, p(f, \cv | \D)}{p(\xv | \D)}\, df\, d\cv \nonumber \\
  &= \frac{p(\xv | \Dl)}{p(\xv | \D)}
    \underbrace{\int_{(f,\cv) \in \Fg \times \Omega}
    p(f, \cv | \D)\, df\, d\cv}_{= \gamma} \nonumber \\
  &\propto \qty(\gamma + (1-\gamma)
  \frac{p(\xv|\Dg)}{p(\xv|\Dl)})^{-1}
  \rankeq \frac{p(\xv|\Dl)}{p(\xv|\Dg)}
  \label{eq:joint-af}
\end{align}
where $\Fg \coloneqq (-\infty, f^\gamma]$ is the $\gamma$-quantile objective region on the feasible region, and the last line uses $p(\xv|\D) = \gamma\, p(\xv|\Dl) + (1-\gamma)\, p(\xv|\Dg)$, obtained by marginalizing the joint likelihood in Eq.~\eqref{eq:joint-split} over all $(f, \cv)$.
\end{proof}

This result has three notable properties.
First, when all observations are feasible, the joint split shown in Eq.~\eqref{eq:joint-split} reduces to the standard TPE split shown in Eq.~\eqref{eq:tpe-split}.
Second, the $\gamma$-quantile value $f^\gamma$ constrained on the feasible region is identical to c-TPE's split for the objective component.
Third, and most importantly, the joint formulation uses one KDE pair whose split accounts for all constraints simultaneously, whereas independent c-TPE uses $C+1$ pairs and treats each constraint marginally.
This means duplicating a constraint does not change the feasible/infeasible partition under joint c-TPE, making the acquisition function invariant to constraint duplication unlike independent c-TPE.
We empirically verify the last property in the next section.

\section{Empirical Evaluations}\label{sec:experiments}

We compare independent and joint c-TPE on a 2D toy problem, minimizing $f(x, y) = (x-2)^2 + (y-2)^2$ subject to two constraints $c_1(x, y) = x \leq 0, c_2(x, y) = y \leq 0$ over $[-5, 5]^2$.
We consider two scenarios: (1)~independent constraints $c_1, c_2$, and (2)~$c_1$ and many duplicated copies of $c_2$ (perfectly correlated constraints).
Comparing the scenario with many duplicated constraints (\textbf{Bottom}) to the independent constraint scenario (Figure~\ref{fig:results} (\textbf{Top})), independent c-TPE degrades because each duplicate adds a factor to the product of relative density ratios, accumulating constraint information and overweighting feasibility along the $y$-axis relative to the $x$-axis.
Meanwhile, joint c-TPE is invariant to duplication since the feasible/infeasible partition remains unchanged.
Note, however, that independent c-TPE outperformed joint c-TPE in the independent constraint scenario because the scenario aligns better with the independent formulation.

\begin{figure}[t]
  \centering
  \includegraphics[width=0.92\linewidth]{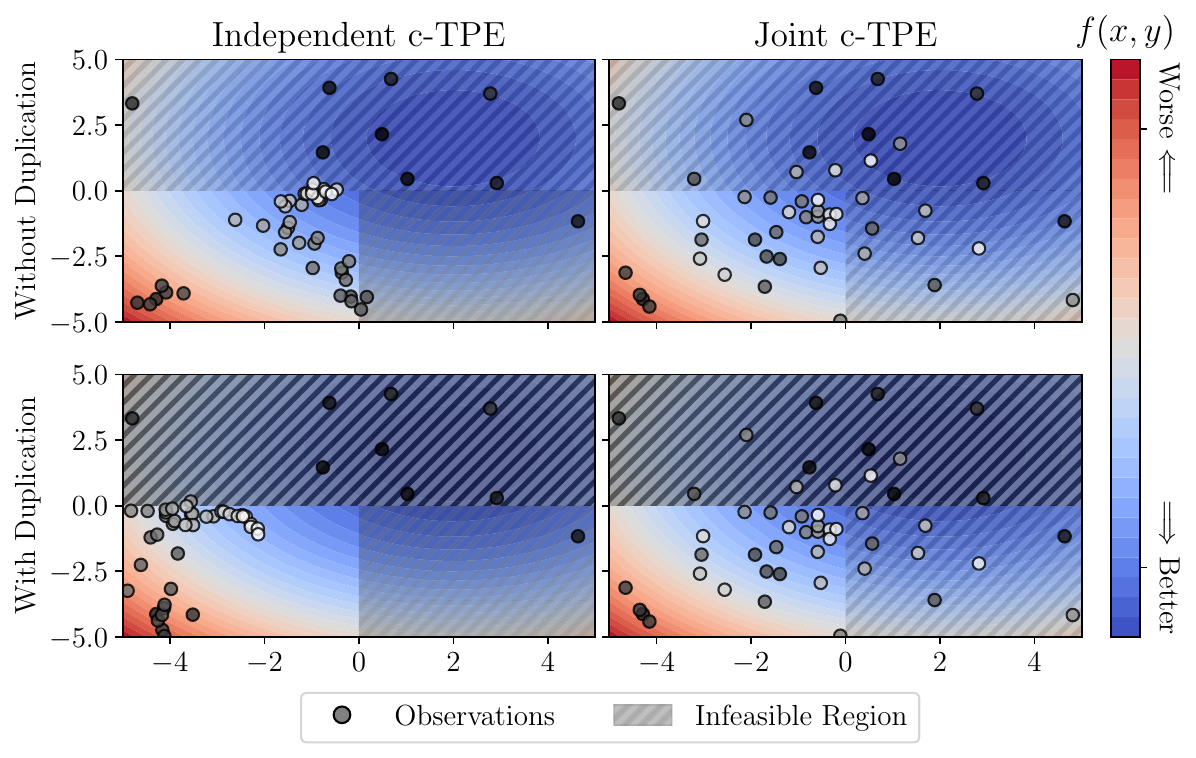}
  \vspace{-2mm}
  \caption{
    The comparison of independent c-TPE (\textbf{Left}) and joint c-TPE (\textbf{Right}) on a 2D problem with $50$ observations.
    The shaded area shows the feasible region, and the color gradation shows the objective value, which is better when the color is darker.
    Each dot represents an observation.
    The dots are colored based on the observation order; black means early observations, and white means later observations.
    \textbf{Top}: The case without duplicated constraints.
    Independent c-TPE explores at the boundary aggressively by exploiting the independence of the constraint pair.
    \textbf{Bottom}: The case with many duplicated constraints; its infeasible region shows much darker coloring.
    While joint c-TPE exhibits the identical sampling behavior, independent c-TPE shows highly conservative exploration along the $x$-axis because the feasibility along the $y$-axis is overweighted.
  }
  \label{fig:results}
\end{figure}

\section{Limitations \& Conclusion}\label{sec:discussion}

A primary methodological limitation of joint c-TPE comes from Assumption~\ref{assumption:gamma-feasibility}.
When Assumption~\ref{assumption:gamma-feasibility} is violated, Optuna fills empty slots in $\Dl$ from infeasible observations sorted by total constraint violation~\citep{deb2002fast}.
However, the effectiveness and theoretical validity of this fallback strategy remain largely unexplored, unlike independent c-TPE which has been carefully engineered and analyzed by \citet{watanabe2022c}.
Open questions include: (1) which tie-breaking strategies are viable, (2) which performs best, and (3) when joint c-TPE outperforms independent c-TPE (and when not).
A key structural difference is that independent c-TPE requires only one feasible solution per constraint individually, whereas joint c-TPE requires feasibility across \emph{all} constraints simultaneously.
This stricter feasibility requirement can be problematic in independently or severely constrained settings.
Additionally, joint c-TPE cannot structurally accommodate partial observations (see Appendix C of \citet{watanabe2022c}), which may slow convergence when some constraints are cheaper to evaluate than the objective.

On the other hand, joint c-TPE offers complementary advantages.
It is structurally simpler and naturally invariant to constraint duplication---a property independent c-TPE lacks.
Since joint c-TPE maintains only one KDE pair rather than $C+1$ pairs, it aligns naturally with multi-objective TPE formulations~\citep{ozaki2020multiobjective,ozaki2022multiobjective}, facilitating future extensions.
Determining when each formulation is preferable remains an open question, requiring systematic empirical comparison across diverse problem structures as future work.

\paragraph{Citation Guidance}
When using Optuna's constrained TPE, we recommend citing both this paper and \citet{watanabe2022c}, as the latter provides a more detailed treatment of the constrained TPE mechanism and problem setup.

\bibliography{ref}

\begin{thebibliography}{}

\bibitem[Akiba et~al., 2019]{akiba2019optuna}
Akiba, T., Sano, S., Yanase, T., Ohta, T., and Koyama, M. (2019).
\newblock {Optuna}: A next-generation hyperparameter optimization framework.
\newblock In {\em ACM SIGKDD International Conference on Knowledge Discovery \&
  Data Mining}.

\bibitem[Bergstra et~al., 2011]{bergstra2011algorithms}
Bergstra, J., Bardenet, R., Bengio, Y., and K{\'e}gl, B. (2011).
\newblock Algorithms for hyper-parameter optimization.
\newblock In {\em Advances in Neural Information Processing Systems}.

\bibitem[Deb et~al., 2002]{deb2002fast}
Deb, K., Pratap, A., Agarwal, S., and Meyarivan, T. (2002).
\newblock A fast and elitist multiobjective genetic algorithm: {NSGA-II}.
\newblock {\em IEEE Transactions on Evolutionary Computation}, 6(2).

\bibitem[Gardner et~al., 2014]{gardner2014bayesian}
Gardner, J.~R., Kusner, M.~J., Xu, Z.~E., Weinberger, K.~Q., and Cunningham,
  J.~P. (2014).
\newblock {B}ayesian optimization with inequality constraints.
\newblock In {\em International Conference on Machine Learning}.

\bibitem[Gelbart et~al., 2014]{gelbart2014bayesian}
Gelbart, M.~A., Snoek, J., and Adams, R.~P. (2014).
\newblock {B}ayesian optimization with unknown constraints.
\newblock In {\em Uncertainty in Artificial Intelligence}.

\bibitem[Ozaki et~al., 2022]{ozaki2022multiobjective}
Ozaki, Y., Tanigaki, Y., Watanabe, S., Nomura, M., and Onishi, M. (2022).
\newblock Multiobjective tree-structured {P}arzen estimator.
\newblock {\em Journal of Artificial Intelligence Research}, 73.

\bibitem[Ozaki et~al., 2020]{ozaki2020multiobjective}
Ozaki, Y., Tanigaki, Y., Watanabe, S., and Onishi, M. (2020).
\newblock Multiobjective tree-structured {P}arzen estimator for computationally
  expensive optimization problems.
\newblock In {\em Genetic and Evolutionary Computation Conference}.

\bibitem[Ozaki et~al., 2026]{ozaki2025optunahub}
Ozaki, Y., Watanabe, S., and Yanase, T. (2026).
\newblock {OptunaHub}: A platform for black-box optimization.
\newblock {\em Journal of Machine Learning Research}.

\bibitem[Watanabe, 2023]{watanabe2023tree}
Watanabe, S. (2023).
\newblock Tree-structured {P}arzen estimator: Understanding its algorithm
  components and their roles for better empirical performance.
\newblock {\em arXiv preprint arXiv:2304.11127}.

\bibitem[Watanabe and Hutter, 2023]{watanabe2022c}
Watanabe, S. and Hutter, F. (2023).
\newblock {c-TPE}: Tree-structured {P}arzen estimator with inequality
  constraints for expensive hyperparameter optimization.
\newblock In {\em International Joint Conference on Artificial Intelligence}.

\end{thebibliography}

\end{document}